\let\originalleft\left
\let\originalright\right
\renewcommand{\left}{\mathopen{}\mathclose\bgroup\originalleft}
\renewcommand{\right}{\aftergroup\egroup\originalright}
 \newcommand{\deleted}[1]{{}}
 \newcommand{\deleted}[1]{{\sout{\color{red}#1}}} 
\newtheorem{thm}{Theorem}[section]
\newtheorem{lemma}[thm]{Lemma}
\theoremstyle{definition}
\newcommand{\norm}[1]{\|#1\|}
\newcommand{\nrm}[1]{|#1|}
\newcommand{\R}{{\mathbb R}}
\newcommand{\bpm}{\begin{pmatrix}}
\newcommand{\epm}{\end{pmatrix}}
\newcommand{\mxmz}{\operatorname*{maximize}}
\newcommand{\argmax}{\operatorname*{argmax}}
\newcommand{\eps}{{\varepsilon}}
\newcommand{\neval}{N_{\rm eval}}
\newcommand{\nloc}{N_{\rm loc}}
\newcommand{\mA}{\textit{Mesh}}
\newcommand{\mB}{\textit{Time-optimal}}
\newcommand{\mC}{\textit{Robust}}
\pgfplotsset{compat=1.3}
\pgfplotsset{
    line1/.style={%
        black, dashed, no marks, thick},
    line2/.style={%
        blue, no marks, thick},
    line2a/.style={%
        black, dotted, thick},
    line3a/.style={%
        blue, thick},
    line4a/.style={%
        orange, dashed, thick},
    row1/.style={%
        xmin=0, xmax=21,
        ymin=35, ymax=70},
    row2/.style={%
        xmin=0, xmax=31,
        ymin=0, ymax=12},
    row3/.style={%
        xmin=0, xmax=101,
        ymin=0, ymax=18},
}
\newcommand\copyrighttext{%
  \footnotesize \textcopyright 2019 IEEE. Personal use of this material is permitted. Permission from IEEE must be obtained for all other uses, in any current or future media, including reprinting/republishing this material for advertising or promotional purposes, creating new collective works, for resale or redistribution to servers or lists, or reuse of any copyrighted component of this work in other works.}
\newcommand\copyrightnotice{%
\begin{tikzpicture}[remember picture,overlay]
\node[anchor=south,yshift=10pt] at (current page.south) {\fbox{\parbox{\dimexpr\textwidth-\fboxsep-\fboxrule\relax}{\copyrighttext}}};
\end{tikzpicture}%
}
\begin{document}

\title{A Simple Yet Effective Approach to \\ Robust Optimization Over Time}

\author{
\IEEEauthorblockN{Luk\'a\v{s} Adam \qquad and \qquad Xin Yao}
\thanks{This work was supported by National Natural Science Foundation of China (Grant No. 61850410534), the Program for Guangdong Introducing Innovative and Enterpreneurial Teams (Grant No. 2017ZT07X386), Shenzhen Peacock Plan (Grant No. KQTD2016112514355531), and the Program for University Key Laboratory of Guangdong Province (Grant No. 2017KSYS008),

Both authors are with Shenzhen Key Laboratory of Computational Intelligence, University Key Laboratory of Evolving Intelligent Systems of Guangdong Province, Department of Computer Science and Engineering, Southern University of Science and Technology, Shenzhen 518055, China. Email: adam@utia.cas.cz, xiny@sustech.edu.cn (corresponding author)
}
}

\maketitle
\copyrightnotice

\begin{abstract}
Robust optimization over time (ROOT) refers to an optimization problem where its performance is evaluated over a period of future time. Most of the existing algorithms use particle swarm optimization combined with another method which predicts future solutions to the optimization problem. We argue that this approach may perform subpar and suggest instead a method based on a random sampling of the search space. We prove its theoretical guarantees and show that it significantly outperforms the state-of-the-art methods for ROOT.
\end{abstract}

\begin{IEEEkeywords}
Dynamic optimization; Robust optimization; Robust optimization over time; Uniform sampling; Particle swarm optimization
\end{IEEEkeywords}

\IEEEpeerreviewmaketitle

\section{Introduction}

Classical optimization problems involve minimizing or maximizing a function $f$ over a region $X$. Often, these problems depend on time $t$ and random variables (also called environments) $\bm\alpha(t)$. These problems may be written as
\begin{equation}\label{eq:problem_gen}
\mxmz_{x\in X} f(\bm x; \bm\alpha(t)).
\end{equation}
We focus on the case where at time $t$ only the history of $\bm\alpha(t)$ is known and where there is no information about its future distribution. Moreover, the objective may be accessed only via black-box evaluations without knowing the exact value of $\bm\alpha(t)$. The goal is to find the optimal solution to \eqref{eq:problem_gen}. Since the computation budget is limited, the solution at the current time should be found with the help of function evaluations at previous times.

The setting above describes the ``solution tracking'' where the solution may be recomputed and changed at every time instant. However, this is often not desirable or even impossible as a reimplementation of a solution may be physically impossible or may cause additional costs or inconvenience to users.

Another approach was proposed in \cite{yu2010robust} where the emphasis is not given to the performance up to the current time but over a future time period. Thus, the solution does not have to perform exceptionally well at present but it has to perform satisfactorily over time. The authors named this problem Robust optimization over time (ROOT).

A good ROOT solution should show a good performance in at least one of the two main performance criteria \cite{fu2015robust}. The first one is the average performance over a future time interval while the second one counts how long a solution performs better than a given threshold (precise definitions will be given later).

In this paper, we follow two goals. First, we propose a novel method. While the current state-of-the-art methods use a modification of particle swarm optimization, we propose to uniformly sample the search space and then improve the best point by a local search. The uniform search has the advantage that it gives theoretical bounds for the solution quality. Moreover, if the problem dimension is low, the sampled points may be the same for all time instants. This allows using prediction algorithms without having to reevaluate the functions at previous time instants.

Second, the ROOT papers usually did not describe the parameter initialization, boundary conditions or dynamics properly (see Section \ref{sec:overview}). They contain confusing notations and even plain mistakes. We conjecture that even though all ROOT papers used the same modified moving peak benchmark, they solved different problems due to different parameter settings. At the same time, the papers often did not propose a comparison with the basic benchmark: the solution which performs best at the current time and ignores the future. We try to remedy this situation by describing the benchmark properly, showing a proper comparison with a basic solution approach, and by providing our codes online so that any inconsistency can be immediately clarified.\footnote{\texttt{https://github.com/sadda/ROOT-Benchmark}}

The paper is organized as follows: the introduction is concluded by a short literature survey. In Section \ref{sec:algorithms} we propose our novel method and in Section \ref{sec:benchmarks} we try to codify the benchmark problems. Section \ref{sec:numerics} consists of the numerical part. To keep the paper as clear as possible, multiple results were moved to the Appendix.



\section{A simple approach to ROOT}\label{sec:algorithms}

In this section, we provide a literature overview, specify the problem formulation, propose a solution method and perform its basic analysis.

\subsection{Literature overview}\label{sec:overview}

There are numerous alternatives to approaching \eqref{eq:problem_gen}. Stochastic optimization \cite{birge2011introduction} maximizes $f$ in expectation while robust optimization \cite{ben2009robust} maximizes it in worst-case. Dynamic optimization \cite{bertsekas1995dynamic} models the evolution via an ordinary differential equation while multi-stage programming \cite{pereira1991multi} generalizes the stochastic optimization by considering a longer horizon. All of these fields assume the knowledge of the distribution of $\bm \alpha(t)$ and they are computationally rather expensive.

Concerning the literature overview for ROOT, \cite{yu2010robust} was the first paper to propose the ROOT problem. This paper did not consider any numerical results. \cite{fu2012characterizing} suggested new metrics requiring the knowledge of the optimal solution and tried to formalize the benchmark problem. \cite{fu2013finding} suggested the survival metric where the optimal solution does not need to be known. \cite{jin2013framework} investigated predicting the future by autoregressive series. \cite{guo2014find} considered ROOT as a bi-objective problem of maximizing the survival time and the average future fitness. \cite{fu2015robust} provided a new benchmark with known solutions. \cite{yazdani2017new} proposed a new method based on multi-swarm particle optimization. \cite{novoa2018approximation} investigated several methods for predicting future solutions. \cite{yazdani2018robust} proposed new techniques to predict future solutions and provided extensive literature overview and numerical study. \cite{chen2015evolutionary} generalized the concept into the multi-objective optimization.

\subsection{Problem formulation}

We consider the time discrete ROOT problem, where we need to solve \eqref{eq:problem_gen} for all $t\in\{1,\dots,T\}$. We consider a rather general case where at time $t$ we can evaluate the objective value $f(\bm x,\bm\alpha(t))$ for any query point $\bm x$. We do not know the exact value of $\bm\alpha(t)$ or its future distribution but we can make use of all queries (function evaluations) from previous time instants $1,\dots,t-1$.

To evaluate the solution $\bm x(t)$ quality at time $t$, we consider two metrics
\begin{equation}\label{eq:metrics}
\aligned
F_{\rm aver}(\bm x(t); t) &= \frac1S\sum_{s=0}^{S-1} f(\bm x(t); \bm\alpha(t+s)), \\
F_{\rm surv}(\bm x(t); t) &= \min\{s\ge 0 \mid f(\bm x(t); \bm\alpha(t+s)) \le f^* \}.
\endaligned
\end{equation}
The averaged objective metric $F_{\rm aver}$ measures the average from the future $S$ values while the survival metric $F_{\rm surv}$ measures how long the objective stays above a threshold $f^*$. Note that both metrics make use of the objective function $f$ at the current time (which can be evaluated) and at the future times (which can be only predicted).

A word of caution is needed here. The future values $\bm\alpha(t+s)$ in \eqref{eq:metrics} are considered to be fixed but not known. In the field of stochastic optimization \cite{birge2011introduction} this amounts to adding expectation with respect to $\bm\alpha$ to \eqref{eq:metrics}. Since in the numerical section, we will average the results with respect to different realizations of $\bm\alpha$, we should technically add this expectation to \eqref{eq:metrics} as well. The key difference is that stochastic optimization assumes the future distribution to be known while we assume it to be unknown.

\subsection{Proposed methods}

Most of the existing methods for ROOT are based on particle swarm optimization. These papers do not provide any convergence proofs and require hyperparameter tuning. In this section, we propose two very simple methods which do not suffer from these issues. The first one solves \eqref{eq:problem_gen} at the current time $t$ without considering the past or the future while the second one tries to obtain a robust solution. Note that at every time instant $t$, we have the computational budget of $\neval$ evaluations of $f(\cdot;\bm\alpha(t))$.

The first method spends $N$ evaluations on a global search and $\nloc=\neval-N$ evaluations on a local search. The global search is performed by a uniform discretization of the search space into $\{\bm x_1,\dots,\bm x_N\}$ and evaluating $f_n(t) = f(\bm x_n; \bm \alpha(t))$ for all $n=1,\dots,N$. Then we find the index $n_{\rm max}$ where $f_n(t)$ has the maximal value and improve $\bm x_{n_{\rm max}}$ by any local search method within $\nloc$ function evalutions. We provide a summary in Algorithm \ref{alg:method1}.

\begin{algorithm}[H]
    \centering
    \caption{Hybrid uniform sampling and local search method for solving ROOT}
    \label{alg:method1}
    \begin{algorithmic}[1]
\Require Number of function evaluation $\neval$, number of function evaluations for the local search $\nloc$
\State Set $N\gets \neval - \nloc$
\State Discretize the search space $X$ into $\bm x_1,\dots,\bm x_N$
\For{$t=1,\dots,T$}
\State Evaluate $f_n(t)\gets f(\bm x_n;\bm\alpha(t))$ for $n=1,\dots,N$
\State Find the index $n_{\rm max}$ with maximal value of $f_n(t)$
\State Improve $\bm x_{n_{\rm max}}$ by local search in $\nloc$ function evaluations to obtain optimal solution $\bm x_{\rm opt}(t)$
\EndFor
\State \textbf{return} $(\bm x_{\rm opt}(1), \dots, \bm x_{\rm opt}(T))$
    \end{algorithmic}
\end{algorithm}

The second method spends all $\neval$ evaluations on a global search. Again, we uniformly discretize the search space into $\{\bm x_1,\dots,\bm x_{\neval}\}$ and evaluate $f_n(t) = f(\bm x_n; \bm \alpha(t))$ for all $n=1,\dots,\neval$. The robust solution is selected by any method which takes into account the function values at a neighborhood or at previous time instants. Since the space discretization is the same at every time, besides $f_n(t)$ we also know $f_n(t-1),\dots,f_n(1)$ from previous iterations and we do need to invest any additional function evaluations. Thus, we may apply most of the methods from other ROOT papers for free. We provide a summary in Algorithm \ref{alg:method2}.

\begin{algorithm}[H]
    \centering
    \caption{Uniform sampling method for solving ROOT}
    \label{alg:method2}
    \begin{algorithmic}[1]
\Require Number of function evaluation $\neval$
\State Discretize the search space $X$ into $\bm x_1,\dots,\bm x_{\neval}$
\For{$t=1,\dots,T$}
\State Evaluate $f_n(t)\gets f(\bm x_n;\bm\alpha(t))$ for $n=1,\dots,\neval$
\State Based on $f_n(t),f_n(t-1),\dots$ for $n=1,\dots,\neval$ find robust solution $\bm x_{\rm rob}(t)$ 
\EndFor
\State \textbf{return} $(\bm x_{\rm rob}(1), \dots, \bm x_{\rm rob}(T))$
    \end{algorithmic}
\end{algorithm}

If the search space is $X=[x_{\rm min}, x_{\rm max}]^D$, then Appendix \ref{app:lemma} implies that the procedure from Algorithm \ref{alg:method1} gives a solution which is optimal with the following bound
\begin{equation}\label{eq:bound1}
f(\bm x_{\rm opt}(t);\bm\alpha(t)) \ge f^*(t) - \frac{L\sqrt{D}(x_{\rm max} - x_{\rm min})}{2(N^{\frac1D}-1)},
\end{equation}
where $f^*(t)$ is the optimal solution at time $t$ and $L$ is the so-called Lipschitz constant of $f(\cdot;\bm\alpha(t))$. Since most ROOT methods were tested for the two-dimensional case $D=2$, the previous bound is rather tight.
The solution quality is further improved by the local search.  

We would like to summarize the benefits of our approach:
\begin{enumerate}
\item Equation \eqref{eq:bound1} gives a guaranteed bound for the solution quality.
\item Since the same points are evaluated at all time instants, using any tracking or prediction mechanism from other ROOT papers requires no additional function evaluations.
\end{enumerate}

\section{Numerical benchmarks}\label{sec:benchmarks}

In this section, we describe the moving peak benchmark commonly used in the ROOT literature. It is based on \cite{branke1999memory} and appeared in many papers \cite{fu2015robust,fu2012characterizing,fu2013finding,jin2013framework,guo2014find,yazdani2017new,novoa2018approximation,yazdani2018robust}. However, to the best of our knowledge, no complete and proper description was given in any of these papers. Since, as we will show later, even a small change in the problem setting may have a large impact on the optimal solution, we try to provide a rigorous statement of the benchmark problems.

\subsection{Moving peaks benchmark 1}

This benchmark considers $M$ peaks of conic shape in $\R^D$. Peak $m$ has center $\bm c^m$, height $h^m$ and width $w^m$. Defining the random vector $\bm \alpha=(\bm c^m,h^m,w^m)_{m=1}^M$, the objective function
$$
f_t^1(\bm x;\bm \alpha(t)) = \max_{m=1,\dots,M}\left(h_t^m-w_t^m\norm{\bm x-\bm c_t^m}_{l_2}\right),
$$
measures that the height of maximal peak at $\bm x$. We use the shortened notation $h_t=h(t)$.

The dynamics of the random vector is given by
\begin{equation}\label{eq:bench1_dynamics}
\aligned
h_{t+1}^m &= h_t^m + \sigma_h^m \cdot N(0,1), \\
w_{t+1}^m &= w_t^m + \sigma_w^m \cdot N(0,1), \\
\bm c_{t+1}^m &= \bm c_t^m + \bm v_{t+1}^m, \\
\bm v_{t+1}^m &= s^m\frac{(1-\lambda)\bm r_{t+1}^m + \lambda \bm v_t^m}{\norm{(1-\lambda)\bm r_{t+1}^m + \lambda \bm v_t^m}}.
\endaligned
\end{equation}
Here, $N(0,1)$ denotes the normal distribution with zero mean and unit variance, $\bm r_t^m$ follows the uniform distribution on the $D$-dimensional sphere with radius $s^m$ and $\sigma_h^m\ge 0$, $\sigma_w^m\ge 0$ and $\lambda\in[0,1]$ are fixed parameters. The peak height $h_{t+1}^m$ differs from the previous height $h_t^m$ by a random number drawn from the normal distribution with zero mean and standard deviation $\sigma_h^m$. Similar holds true for the widths. The center $\bm c_{t+1}^m$ moves from $\bm c_t^m$ by vector $\bm v_{t+1}^m$. If $\bm v_1^m$ has norm $s^m$, then we have
$$
\aligned
\lambda = 0 &\implies \bm v_{t+1}^m = \bm r_{t+1}^m, \\
\lambda = 1 &\implies \bm v_{t+1}^m = \bm v_t^m.
\endaligned
$$
Thus, $\lambda=0$ implies that the movement of the peak centers is random while $\lambda=1$ implies that the movement is constant in direction $\bm v_1^m$. In both cases the distance between the previous and new centers is $s^m$.

The random variables have their bounds. We require $h_t^m\in [h_{\rm min}, h_{\rm max}]$ and $w_t^m\in [w_{\rm min}, w_{\rm max}]$. The bounds for the centers $\bm c_t^m\in [x_{\rm min}, x_{\rm max}]^D$ are the same as for the search space. If the dynamics \eqref{eq:bench1_dynamics} pushes some variable out of its corresponding bounds, we project (clip) it back. 

Finally, for initialization of \eqref{eq:bench1_dynamics} we need to know the initial centers $\bm c_0^m$, heights $h_0^m$, widths $w_0^m$ and the initial speeds $\bm v_0^m$. Following previous papers, we initialize the centers randomly in the search space $[x_{\rm min}, x_{\rm max}]^D$, the heights and widths to some known values $h_{\rm init}$ and $w_{\rm init}$, respectively and the initial speed is generated randomly at the $D$-dimensional sphere with radius $s^m$.

Note that in the literature there are some differences which we summarize in Appendix \ref{app:dynamics}.

\subsection{Moving peaks benchmark 2}

The second benchmark problem was defined in \cite{fu2015robust} by the objective
$$
\aligned
f_t^2(\bm x;\bm \alpha(t)) &= \frac1D\sum_{d=1}^D\max_{m=1,\dots,M}\left(h_t^{m,d}-w_t^{m,d}\nrm{x^d-c_t^{m,d}}\right), \\
\endaligned
$$
The upper index $d$ denotes the $d^{\rm th}$ component of a vector. Then the $D$-dimensional problem can be decomposed into $D$ one-dimensional problems. Moreover, since the heights are different in each dimension, the problem does not technically handle moving peaks anymore.

The authors in \cite{fu2015robust} considered several dynamics, we will mention only the one most similar to \eqref{eq:bench1_dynamics}, namely 
\begin{equation}\label{eq:bench2_dynamics}
\aligned
h_{t+1}^{m,d} &= h_t^{m,d} + \sigma_h^m \cdot N(0,1), \\
w_{t+1}^{m,d} &= w_t^{m,d} + \sigma_w^m \cdot N(0,1), \\
\bm c_{t+1}^m &= R(\theta_t^{D-1},\dots,\theta_t^1) \bm c_t^m, \\
\theta_{t+1}^d &= \theta_t^d + \sigma_\theta \cdot N(0,1).
\endaligned
\end{equation}
The dynamics for the heights and widths are the same as in the first benchmark \eqref{eq:bench1_dynamics}. The center are rotated based on the rotation matrix $R(\theta_t^{D-1},\dots,\theta_t^1)=R^{D-1}(\theta_t^{D-1})\dots R^1(\theta_t^1)$, where each rotation matrix $R^d(\theta_t^d)$ performs the rotation in the $d$-$(d+1)$ plane by angle $\theta_t^d$.

We handle the technicalities similarly as for the first benchmark. If the variables get out of bounds, we project them back. We initialize the centers randomly in the search space $[x_{\rm min}, x_{\rm max}]^D$. Based on \cite{fu2015robust} the initial heights and widths and generated randomly from their bounds. However, the initial $\theta_1^d$ is set to $\theta_{\rm init}$.

\section{Experimental results}\label{sec:numerics}

In this section, we describe the performance of our methods from Section \ref{sec:algorithms} on the benchmarks from Section \ref{sec:benchmarks}. All displayed results are averaged over $5000$ independent simulations of $\bm\alpha$.

\subsection{Parameter setting}\label{sec:setting}

In Table \ref{table:parameters} we show the used parameters. We first generated the random evolution of $\bm\alpha$ and then uniformly discretized the search space $[x_{\rm min},x_{\rm max}]^D$ into $\neval=2500$ points. Algorithm \ref{alg:method1} randomly selected $2300$ of these $2500$ points at each $t$, evaluted $f(\cdot,\bm\alpha(t))$, selected the best value and invested the remaining $200$ function evaluations into the local search made by the Matlab built-in function \texttt{fmincon}. Algorithm \ref{alg:method2} evaluated all $2500$ points and replaced the function value at a point by the average of all neigboring values with the maximal distance of $3$ (points outside of search space were ignored). The solution with the highest average was deemed to be robust.

\begin{table}[!ht]
\caption{Parameter Values for Benchmark Problems}
\label{table:parameters}
\centering 
\begin{tabular}{@{}lll@{}} \\\toprule
Parameter & Benchmark 1 & Benchmark 2 \\\midrule
$\neval$ & $2500$ & $2500$ \\
$M$ & $5$ & $25$ \\
$D$ & $2$ & $2$ \\
$\lambda$ & \{0,1\} & - \\
$[x_{\rm min}, x_{\rm max}]$ & $[0, 50]$ & $[-25, -25]$ \\
$[h_{\rm min}, h_{\rm max}]$ & $[30, 70]$ & $[30, 70]$ \\
$[w_{\rm min}, w_{\rm max}]$ & $[1, 12]$ & $[1, 13]$ \\
$[\theta_{\rm min}, \theta_{\rm max}]$ & - & $[-\pi, \pi]$ \\
$\sigma_h$ & $U(1, 10)$ & $5$ \\
$\sigma_w$ & $U(0.1, 1)$ & $0.5$ \\
$\sigma_\theta$ & - & $1$ \\
$h_{\rm init}$ & $50$ & $U(h_{\rm min}, h_{\rm max})$ \\
$w_{\rm init}$ & $6$ & $U(w_{\rm min}, w_{\rm max})$ \\
$\theta_{\rm init}$ & - & $0$ \\\bottomrule
\end{tabular}
\end{table}

Even though it is possible to implement predicting future values by using function evaluations at previous time instants, we decided not to do so. The reason is that even this basic method significantly outperforms the state-of-the-art algorithms and adding the predictions could cloud the basic idea.

\subsection{Numerical results}

We compare three methods. {\mA} and {\mB} are based on Algorithm \ref{alg:method1} with the difference that {\mA} does not perform the local search. {\mC} is based on Algorithm \ref{alg:method2}. Numerical details are described in Section \ref{sec:setting}.

We compare the {\mB} method to known results in Table \ref{table:comparison}. On Benchmark 1 with $\lambda\in\{0,1\}$ and Benchmark 2 we show the averaged objective $F_{\rm aver}$ with time window $S\in\{2,6\}$ and the survival function $F_{\rm surv}$ with $\delta\in\{40,50\}$; both defined in \eqref{eq:metrics}. We used the horizon $T=100$ and the results shown are averages for all time instants with $t\in[20,100]$. For all benchmarks and evaluation criteria, our results are significantly better than the best-known results. We comment more on how we collected the best-known results in Appendix \ref{app:comparison}.

We can even show that our results are almost optimal. Consider Benchmark 1 with $\lambda=0$. Discussion in Appendix \ref{app:peak} shows that the optimal solution has the expected value of approximately $65$. Since the peak moves with stepsize $s^m=1$ and the average width is $6.5$, the objective drops to $65-6.5=58.5$ for the next time instant. But this gives the expected objective $\frac12(65+58.5)=61.75$ for $S=2$ to which our value $61.13$ from Table \ref{table:comparison} is very close.

This intuition is confirmed in Table \ref{table:distance} where we show the gap between the optimal objective and the objective found. {\mA} shows approximately half of the theoretical gap \eqref{eq:bound1} while this gap is almost zero when we improve it by the local search via {\mB}. This means that {\mB} found the centre of the highest peak. We would like to stress that the information about the highest peak was not used during the optimization and we used it only a posteriori for evaluating performance.

\begin{table}[!ht]
\caption{Gap Between the Best Possible Objective $f_t^*$ and the Objective Found by Our Methods}
\label{table:distance}
\centering 
\begin{tabular}{@{}lllll@{}} \\\toprule
 & Maximal gap \eqref{eq:bound1} & {\mA} & {\mB} & {\mC} \\\midrule
Benchmark 1 & $4.69$ &$2.18$ & $0.09$ & $5.38$ \\
Benchmark 2 & $5.05$ &$0.99$ & $0.15$ & $4.38$ \\\bottomrule
\end{tabular}
\end{table}

\begin{table*}[!ht]
\caption{Comparison of Best Known and our Results. All Methods Use $2500$ Function Evaluations at Each Time Instant. The Process of Collecting the Best Known Results is Described in Appendix \ref{app:comparison}. All Experiments Were Repeated $5000$ Times.}
\label{table:comparison}
\centering 
\begin{tabular}{@{}ll llll llll@{}} \\\toprule
Setting & From & \multicolumn{4}{c}{Best known result} & \multicolumn{4}{c}{Our result} \\\cmidrule(l{3pt}r{3pt}){3-6}\cmidrule(l{3pt}r{3pt}){7-10}
 & & \multicolumn{2}{c}{$F_{\rm aver}$} & \multicolumn{2}{c}{$F_{\rm surv}$} & \multicolumn{2}{c}{$F_{\rm aver}$} & \multicolumn{2}{c}{$F_{\rm surv}$} \\\cmidrule(l{3pt}r{3pt}){3-4}\cmidrule(l{3pt}r{3pt}){5-6}\cmidrule(l{3pt}r{3pt}){7-8}\cmidrule(l{3pt}r{3pt}){9-10}
& & $S=2$ & $S=6$ & $\delta=40$ & $\delta=50$ & $S=2$ & $S=6$ & $\delta=40$ & $\delta=50$ \\\midrule
Benchmark 1 with $\lambda=1$ & \cite{fu2013finding} & $53.48$ & $8.82$ & $3.02$ & $1.69$ & $63.32$ & $58.76$ & $13.72$ & $10.11$ \\
Benchmark 1 with $\lambda=0$ & \cite{jin2013framework} \cite{yazdani2017new} & - & - & $8.35$ & $4.25$ & $61.13$ & $54.77$ & $10.42$ & $5.91$\\
Benchmark 2 & \cite{fu2015robust} \cite{novoa2018approximation} & $48.88$ & $40.58$ & $1.35$ & $1.02$ & $62.21$ & $57.58$ & $16.54$ & $6.38$ \\\bottomrule
\end{tabular}
\end{table*}


Tables \ref{table:distance} and \ref{table:comparison} also suggest why other methods performed subpar:
\begin{enumerate}
\item Since the {\mB} solution lies in the peak centre, it is a natural candidate for the robust solution as well. We believe that the commonly used particle swarm optimization was far away from the peak centre.
\item While incorporating objective tracking, the previous papers needed to reevaluate the point at previous time instants. This reduced the number of investigated points.
\end{enumerate}
Note that as explained at the end of Section \ref{sec:algorithms}, our methods do not suffer from these problems.

We show additional results for Benchmarks 1 and 2 in Figures \ref{fig:bench1} and \ref{fig:bench2}, respectively. For Benchmark 1 the columns show the results for $\lambda=0$ (left) and $\lambda=1$ (right) while for Benchmark 2 the columns show the random generation of initial centers (left) or the grid generation described in Appendix \ref{app:comparison} (right). We can observe the following phenomena:
\begin{itemize}\itemsep 0pt
\item The method with local search {\mB} outperforms the method without the local search {\mA} in all cases.
\item The survival time for {\mC} is better than for {\mB} only for one benchmark.
\item The survival time is stable for Benchmark 2 while it increases with increasing time for Benchmark 1. The reason is that Benchmark 1 initializes the peak heights to $50$ while Benchmark 2 initializes them randomly in $[30,70]$. Thus, for the former case, the maximal peak height is much smaller for the initial time instants.
\item The initialization or parameters have a large impact on the solution (comparison of left and right columns).
\item Benchmark 2 is not affected by the boundary conditions for variables. This does not hold for Benchmark 1 where the survival time increases as the centres hit the boundary and stay there.
\end{itemize}

\noindent To summarize, the {\mB} method, which does not utilize any tracking or future predictions, performs very well on both benchmarks. This raises the question of whether the moving benchmark problem is suitable for ROOT.

\section{Conclusion}

In this paper, we gave a proper description of the moving benchmark problem for ROOT and proposed a simple method to solve it. Our method significantly outperforms other methods. Since we believe that there are multiple deficiencies in most ROOT papers, we suggest that the papers on ROOT should include the following information to facilitate further comparisons and analyses of proposed algorithms:
\begin{enumerate}
\item \textbf{Proper problem description}. Including parameters, special setting and initial conditions. This is needed for other authors to repeat the experiments.
\item \textbf{Codes available online}. When it is not possible to describe everything, codes online help significantly.
\item \textbf{Fair comparison}. In some papers, a comparison was done with different parameter setting. Including higher computational budget.
\item \textbf{Higher number of repetitions}. When the experiment is repeated $20$ or $30$ times as in most papers, the graphs are not smooth and it may be difficult to extract useful information from them.
\item \textbf{Comparison with a basic method}. Sometimes a simple solution (centre of the highest peak) performs well in a more complicated setting (robust solution).
\end{enumerate}
Note that most papers investigated in this manuscript violated all these topics mentioned above.

\appendix

In the Appendix, we provide further technical results that support the main text.

\subsection{Estimate on solution quality}\label{app:lemma}

We recall first two definitions. We say that a function $g$ is Lipschitz on $X$ with constant $L$ if    
$$
\nrm{g(\bm x) - g(\bm y)}\le L \norm{\bm x-\bm y}
$$
for all $\bm x,\bm y\in X$. We say that $\{\bm x^1,\dots,\bm x^S\}$ is $\delta x$-cover of $X$ if for each $\bm x\in X$ there is some $s\in\{1,\dots,S\}$ such that $\norm{\bm x-\bm x^s}\le\delta x$. Then we have the following lemma.

\begin{lemma}\label{lemma}
Consider an optimization problem
\begin{equation}\label{eq:lemma}
\mxmz_{\bm x\in X}\quad g(\bm x),
\end{equation}
where $g$ is Lipschitz continuous with constant $L$. Denote $\bm x^1,\dots,\bm x^S$ to be a $\delta x$-cover of $X$ and $\hat {\bm x}\in\argmax_{s=1,\dots,S} g(\bm x^s)$ to be the best sampled value. Then $\hat{\bm x}$ is an $\eps$-optimal solution of \eqref{eq:lemma} in the sense of
$$
g(\hat{\bm x})\ge \sup_{x\in X}g(\bm x) - L\cdot\delta x.
$$ 
\end{lemma}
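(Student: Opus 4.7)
The plan is to use the $\delta x$-cover property to transfer any high-value point in $X$ to a nearby sample point, and then use the Lipschitz bound to control the loss incurred by this transfer. The argument is essentially a two-line chain of inequalities, so the main work is just setting it up carefully.

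First, I would fix an arbitrary $\bm x \in X$ (or, to handle the supremum cleanly, a maximizing sequence $\bm x_k \in X$ with $g(\bm x_k) \to \sup_{\bm y \in X} g(\bm y)$; if a maximizer exists, one point suffices). By the cover property, for each such $\bm x$ there exists an index $s \in \{1,\dots,S\}$ with $\|\bm x - \bm x^s\| \le \delta x$. The Lipschitz hypothesis then gives $g(\bm x^s) \ge g(\bm x) - L\cdot \delta x$.

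Next I would invoke the definition of $\hat{\bm x}$: since $\hat{\bm x} \in \argmax_{s=1,\dots,S} g(\bm x^s)$, we have $g(\hat{\bm x}) \ge g(\bm x^s)$ for the particular $s$ selected above. Chaining these two inequalities yields $g(\hat{\bm x}) \ge g(\bm x) - L\cdot \delta x$ for every $\bm x \in X$ (or for every term in the maximizing sequence). Taking the supremum over $\bm x \in X$ on the right-hand side (or passing to the limit along the sequence) produces the claimed bound
\[
g(\hat{\bm x}) \ge \sup_{\bm x \in X} g(\bm x) - L \cdot \delta x.
\]

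There is no real obstacle: the proof is a direct combination of the two definitions recalled just before the lemma. The only minor subtlety is whether the supremum is attained. If $X$ is compact and $g$ is continuous (which follows from Lipschitzness), attainment is immediate and one can take $\bm x$ to be an exact maximizer; otherwise one argues through a maximizing sequence and passes to the limit using the fact that the Lipschitz estimate is uniform in $\bm x$. I would briefly mention this point to keep the statement rigorous in the general setting used in the paper.
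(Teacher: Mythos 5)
Your proposal is correct and follows essentially the same route as the paper's proof: pick a maximizing sequence (the paper uses $g(\bm y^n)\ge\sup_X g-\tfrac1n$), map each term to a nearby cover point, apply the Lipschitz bound, use the argmax property of $\hat{\bm x}$, and pass to the limit. Your remark about attainment of the supremum is exactly the subtlety the paper's $\tfrac1n$-sequence handles.
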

\begin{proof}
The existence of the $\delta x$-cover and the Lipschitz continuity of $g$ imply that $g$ is bounded from above on $X$. That means that there is a sequence $\{\bm y^n\}_{n=1}^\infty\subset X$ satisfying
\begin{equation}\label{eq:lemma1}
g(\bm y^n)\ge \sup_{x\in X} g(\bm x) - \frac1n.
\end{equation}
Due to the definition of $\delta x$-cover, for each $n$ there is some $s(n)\in\{1,\dots,S\}$ such that $\norm{\bm x^{s(n)} - \bm y^n}\le\delta x$. This implies
$$
\aligned
\max_{s=1,\dots,S}g(\bm x^s) &\ge g(\bm x^{s(n)}) = g(\bm x^{s(n)}) - g(\bm y^n) + g(\bm y^n) \\
&\ge g(\bm y^n) - L\delta x \ge \sup_{x\in X} g(\bm x) - \frac1n - L\delta x,
\endaligned
$$
where the second inequality follows from the Lipschitz continuity of $g$ and the last inequality from \eqref{eq:lemma1}. Since $n$ is arbitrary, the lemma statement follows.
\end{proof}

To apply this to \eqref{eq:bound1}, it suffices to realize that uniform sampling with $N$ points form a $\delta x$-cover for $[x_{\rm min},x_{\rm max}]^D$ with
$$
\delta x = \frac{\sqrt{D}(x_{\max}-x_{\min})}{2(N^{\frac 1D}-1)}.
$$

\subsection{Differences in benchmark problems from other papers}\label{app:dynamics}

In this section, we comment on small details in the benchmark description. All the mentioned papers wrote $\bm r$ instead of $\bm r_t^m$ in \eqref{eq:bench1_dynamics}. However, since they commented on random movement, we believe that the time-dependence has to be stressed because otherwise, the centres would move in a fixed direction.

The complete problem description also includes what happens when peak height, weight or centre get outside the allowed boundary. While some of the paper described that they are projected back onto the boundary, \cite{fu2012characterizing} noted that they are ``bounced back'', most of the papers did not describe what happens in such a situation. However, this may have a huge impact on the solution. 

Most of the papers generated the initial random vector $\bm r_1^m$ by generating all components randomly in $[-1,1]$ and then normalized the vector into the length of $s^m$. However, this is not equivalent to generating randomly on the sphere with a radius of $s^m$. Figure \ref{fig:counts} shows the angle between the generated vector and the vector $(1,0)$ in the two-dimensional case. The approach from the earlier papers gives a much higher chance for the (normalized) vectors around $(\pm1,\pm1)$. The reason is that the square is ``bigger'' than the circle in these directions.

Finally, \cite{fu2015robust} initialized the initial centres of $25$ peaks by selecting $5$ random points in each dimension and then performing Cartesian product. As we show in Figure \ref{fig:bench2}, this yields hugely different results from randomly generating in the domain.

\begin{figure}[!ht]
\centering
\begin{tikzpicture}
 \pgfplotsset{small,width=8cm,samples=30}
 \begin{groupplot}[xmin=0, xmax=6.2832, group style = {group size = 1 by 1}, grid=major, grid style={dotted, gray!50}]
 \nextgroupplot[xlabel={Angle}, ylabel={Density}, xtick={0,1.5708,3.1416,4.7124,6.2832}, xticklabels={$0$,$\frac12\pi$,$\pi$,$\frac32\pi$,$2\pi$}, legend style = {column sep=10pt, legend columns=2, font=\footnotesize, legend to name=grouplegendA}]
    \addplot [line1] table[x index=0, y index=1] {\tabCounts}; \addlegendentry{Sampling on a square};
    \addplot [line2] table[x index=0, y index=2] {\tabCounts}; \addlegendentry{Uniform sampling};
 \end{groupplot}
  \node at ($(group c1r1) + (0,3.2)$) {\ref{grouplegendA}};  
\end{tikzpicture}
\caption{The sampling on a square, used in previous papers, does not result in uniform sampling.}
\label{fig:counts}
\end{figure}
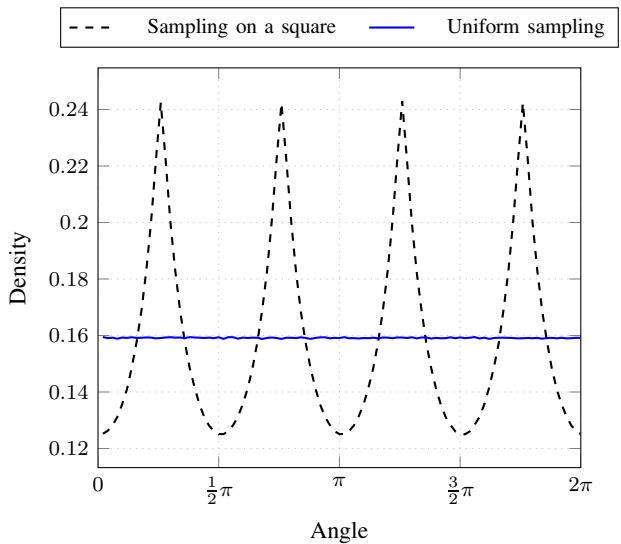

Finally, \cite{yazdani2017new} used a different function count. While the original and our approach recomputed the solution at every time step and then computed its survival based on the future values \cite{yazdani2017new} recomputed the solution only when it dropped below the threshold $\delta$. This resulted in the fact that they used approximately $8$ times more function evaluations.

\subsection{Selecting the best known results}\label{app:comparison}

In this section we describe how we collected the best known results from Table \ref{table:comparison}. Benchmark 1 with $\lambda=1$ is taken from \cite{fu2013finding}, Benchmark 1 with $\lambda=0$ from \cite{yazdani2017new} and Benchmark 2 from \cite{fu2015robust}. Note that \cite{yazdani2017new} compared himself with the results from \cite{fu2013finding,jin2013framework,guo2014find} and showed that their results are superior. For Benchmark 2 we considered only the random movement which in \cite{fu2015robust} was denoted as $TP_{13}$. Finally we did not compare ourself to the promising-looking results from \cite{yazdani2018robust} because they used different setting for the stepsize $s$.

Note that due to the issues described earlier, it may have happened that the setting for our and their papers is different. However, we tried to minimize this possibility.

\subsection{Height of the heighest peak}\label{app:peak}

In Figure \ref{fig:max} we intitialize $M$ peaks with initial heights $h_1^m=h_{\rm init}=50$. We apply the dynamics \eqref{eq:bench1_dynamics} and observe the average height of the highest peak for time instants $t\in[1,20]$. We see that rather soon the average height stabilizes at $65$ for $M=5$ and close to the maximal value $h_{\rm max}=70$ for $M=25$. This is the optimal value for $F_{\rm aver}$ for $S=1$.

\begin{figure}[!ht]
\centering
\begin{tikzpicture}
 \pgfplotsset{small,width=8cm,samples=30}
 \begin{groupplot}[xmin=0, xmax=21, group style = {group size = 1 by 1}, grid=major, grid style={dotted, gray!50}]
 \nextgroupplot[xlabel={Time $t$}, ylabel={Maximal peak height}, legend style = {align=left, column sep=10pt, font=\footnotesize}, legend pos={south east}, legend cell align={left}]
    \addplot [line1] table[x index=0, y index=1] {\tabMax}; \addlegendentry{$M=5$};
    \addplot [line2] table[x index=0, y index=2] {\tabMax}; \addlegendentry{$M=25$};
 \end{groupplot}
\end{tikzpicture}
\caption{The average height of the heighest of $M$ peaks.}
\label{fig:max}
\end{figure}
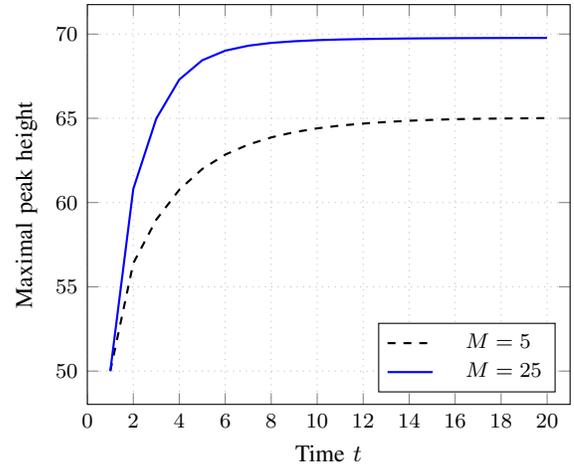

\begin{figure*}[!ht]
\begin{tikzpicture}
 \pgfplotsset{small,width=0.5\linewidth,samples=30}
 \begin{groupplot}[group style = {group size = 2 by 3, vertical sep = 5pt, horizontal sep = 24pt}, grid=major, grid style={dotted, gray!50}, legend cell align={left}]
 \nextgroupplot[row1, align=center, title={$\lambda=0$\\ \\}, font=\footnotesize, ylabel={$F_{\rm aver}$ averaged objective}, xlabel={Averaging window length $S$}, legend style = {column sep=10pt, legend columns=4, legend to name=grouplegendB}]
    \addplot [line2a] table[x index=0, y index=1] {\tabBenchAAverA};\addlegendentry{{\mA}};        
    \addplot [line3a] table[x index=0, y index=2] {\tabBenchAAverA};\addlegendentry{{\mB}};        
    \addplot [line4a] table[x index=0, y index=3] {\tabBenchAAverA};\addlegendentry{{\mC}};        
 \nextgroupplot[row1, align=center, title={$\lambda=1$\\ \\}, font=\footnotesize, ylabel={}, xlabel={Averaging window length $S$}]
    \addplot [line2a] table[x index=0, y index=1] {\tabBenchAAverB};        
    \addplot [line3a] table[x index=0, y index=2] {\tabBenchAAverB};        
    \addplot [line4a] table[x index=0, y index=3] {\tabBenchAAverB};        
  \nextgroupplot[row2, align=center, font=\footnotesize, ylabel={$F_{\rm surv}$ survival time for $\delta=40$}, xticklabels={}, yshift=-2.5em]
    \addplot [line2a] table[x index=0, y index=1] {\tabBenchASurvAA};
    \addplot [line3a] table[x index=0, y index=2] {\tabBenchASurvAA};
    \addplot [line4a] table[x index=0, y index=3] {\tabBenchASurvAA};
 \nextgroupplot[row2, align=center, font=\footnotesize, ylabel={}, xticklabels={}, yshift=-2.5em]
    \addplot [line2a] table[x index=0, y index=1] {\tabBenchASurvBA};        
    \addplot [line3a] table[x index=0, y index=2] {\tabBenchASurvBA};        
    \addplot [line4a] table[x index=0, y index=3] {\tabBenchASurvBA};
  \nextgroupplot[row2, align=center, font=\footnotesize, ylabel={$F_{\rm surv}$ survival time for $\delta=50$}, xlabel={Starting time}, yshift=-0.5em]
    \addplot [line2a] table[x index=0, y index=1] {\tabBenchASurvAB};
    \addplot [line3a] table[x index=0, y index=2] {\tabBenchASurvAB};
    \addplot [line4a] table[x index=0, y index=3] {\tabBenchASurvAB};
 \nextgroupplot[row2, align=center, font=\footnotesize, ylabel={}, xlabel={Starting time}, yshift=-0.5em]
    \addplot [line2a] table[x index=0, y index=1] {\tabBenchASurvBB};        
    \addplot [line3a] table[x index=0, y index=2] {\tabBenchASurvBB};        
    \addplot [line4a] table[x index=0, y index=3] {\tabBenchASurvBB};
 \end{groupplot}
 \node at ($(group c1r1) + (4,3.6)$) {\ref{grouplegendB}};  
\end{tikzpicture}
\caption{Results for Benchmark 1 with $\lambda=0$ (left) and $\lambda=1$ (right). We show the averaged objective $F_{\rm aver}$ as a function of the averaging time window $S$ (top) and the survival function $F_{\rm surv}$ for thresholds $\delta=40$ (middle) and $\delta=50$ (bottom). Note that the metrics are defined in \eqref{eq:metrics}.}
\label{fig:bench1}
\end{figure*}
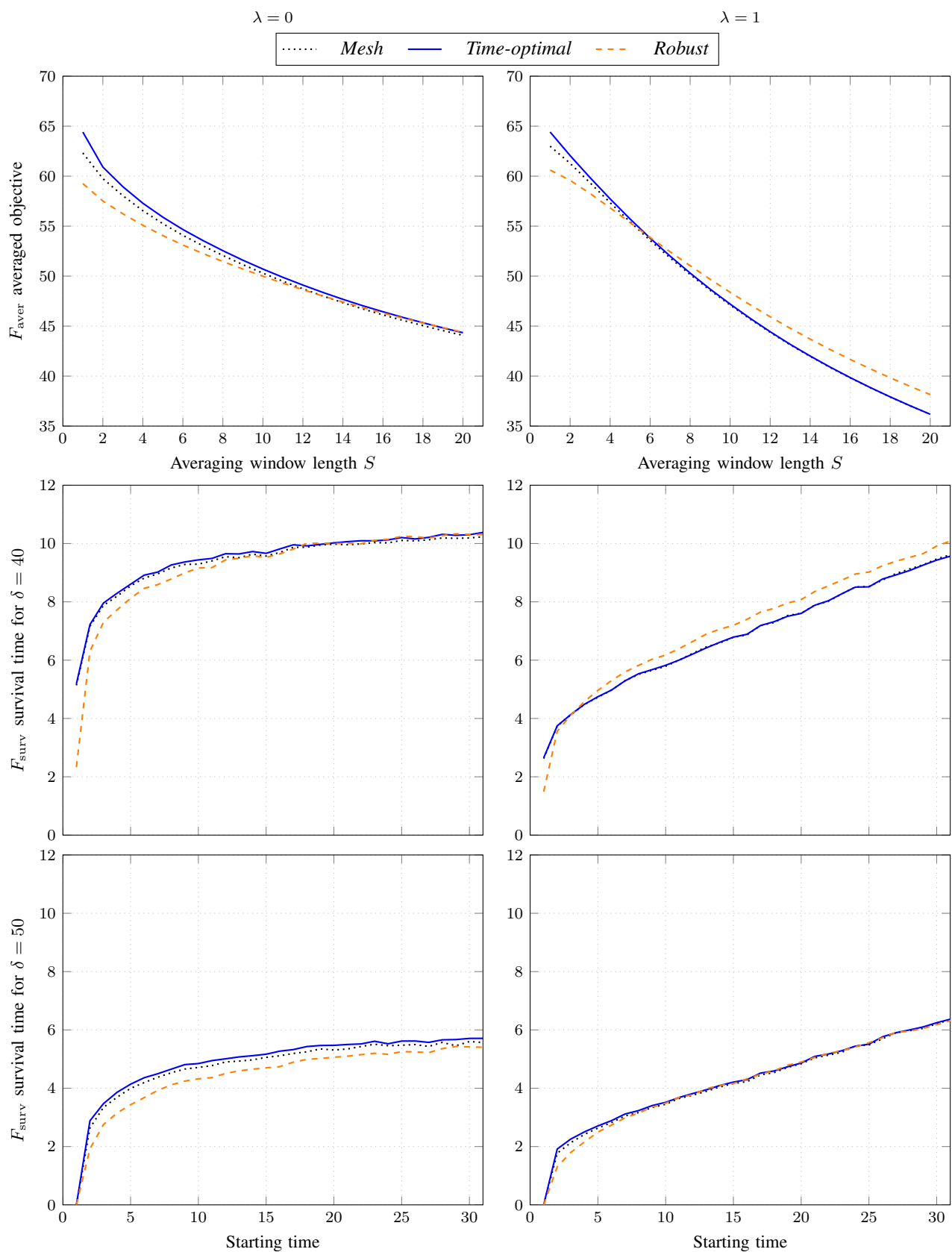

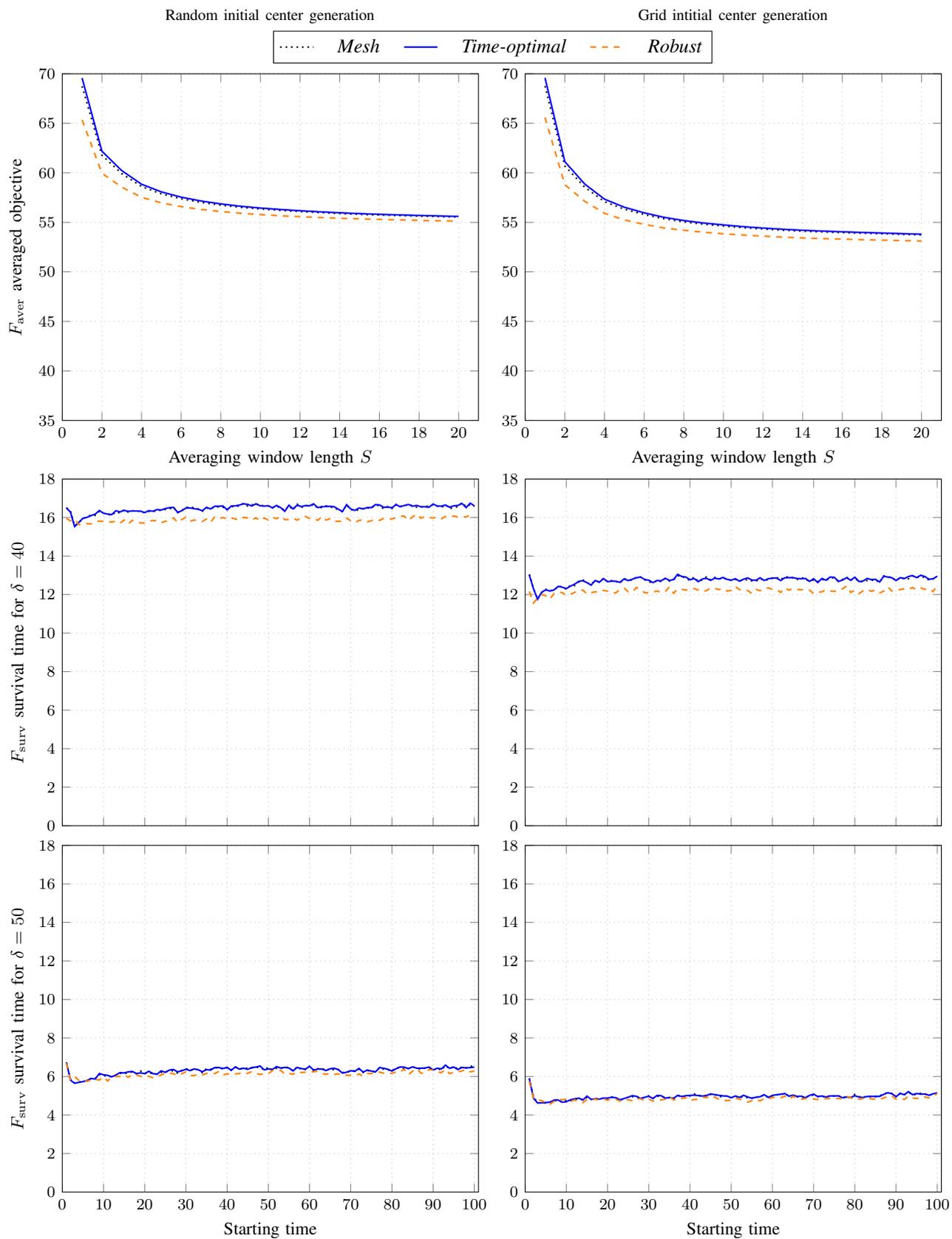
\begin{figure*}[!ht]
\begin{tikzpicture}
 \pgfplotsset{small,width=0.5\linewidth,samples=30}
 \begin{groupplot}[group style = {group size = 2 by 3, vertical sep = 5pt, horizontal sep = 24pt}, grid=major, grid style={dotted, gray!50}, legend cell align={left}]
 \nextgroupplot[row1, align=center, title={Random initial center generation \\ \\}, font=\footnotesize, ylabel={$F_{\rm aver}$ averaged objective}, xlabel={Averaging window length $S$}, legend style = {column sep=10pt, legend columns=4, legend to name=grouplegendC}]
    \addplot [line2a] table[x index=0, y index=1] {\tabBenchBAverA};\addlegendentry{{\mA}};        
    \addplot [line3a] table[x index=0, y index=2] {\tabBenchBAverA};\addlegendentry{{\mB}};        
    \addplot [line4a] table[x index=0, y index=3] {\tabBenchBAverA};\addlegendentry{{\mC}};        
 \nextgroupplot[row1, align=center, title={Grid intitial center generation\\ \\}, font=\footnotesize, ylabel={}, xlabel={Averaging window length $S$}]
    \addplot [line2a] table[x index=0, y index=1] {\tabBenchBAverB};        
    \addplot [line3a] table[x index=0, y index=2] {\tabBenchBAverB};        
    \addplot [line4a] table[x index=0, y index=3] {\tabBenchBAverB};        
  \nextgroupplot[row3, align=center, font=\footnotesize, ylabel={$F_{\rm surv}$ survival time for $\delta=40$}, xticklabels={}, yshift=-2.5em]
    \addplot [line2a] table[x index=0, y index=1] {\tabBenchBSurvAA};
    \addplot [line3a] table[x index=0, y index=2] {\tabBenchBSurvAA};
    \addplot [line4a] table[x index=0, y index=3] {\tabBenchBSurvAA};
 \nextgroupplot[row3, align=center, font=\footnotesize, ylabel={}, xticklabels={}, yshift=-2.5em]
    \addplot [line2a] table[x index=0, y index=1] {\tabBenchBSurvBA};        
    \addplot [line3a] table[x index=0, y index=2] {\tabBenchBSurvBA};        
    \addplot [line4a] table[x index=0, y index=3] {\tabBenchBSurvBA};
  \nextgroupplot[row3, align=center, font=\footnotesize, ylabel={$F_{\rm surv}$ survival time for $\delta=50$}, xlabel={Starting time}, yshift=-0.5em]
    \addplot [line2a] table[x index=0, y index=1] {\tabBenchBSurvAB};
    \addplot [line3a] table[x index=0, y index=2] {\tabBenchBSurvAB};
    \addplot [line4a] table[x index=0, y index=3] {\tabBenchBSurvAB};
 \nextgroupplot[row3, align=center, font=\footnotesize, ylabel={}, xlabel={Starting time}, yshift=-0.5em]
    \addplot [line2a] table[x index=0, y index=1] {\tabBenchBSurvBB};        
    \addplot [line3a] table[x index=0, y index=2] {\tabBenchBSurvBB};        
    \addplot [line4a] table[x index=0, y index=3] {\tabBenchBSurvBB};
 \end{groupplot}
 \node at ($(group c1r1) + (4,3.6)$) {\ref{grouplegendB}};  
\end{tikzpicture}
\caption{Results for Benchmark 2 with random center generation (left) and the grid center generation described in Appendix \ref{app:comparison} (right). We show the averaged objective $F_{\rm aver}$ as a function of the averaging time window $S$ (top) and the survival function $F_{\rm surv}$ for thresholds $\delta=40$ (middle) and $\delta=50$ (bottom). Note that the metrics are defined in \eqref{eq:metrics}.}
\label{fig:bench2}
\end{figure*}


\bibliographystyle{IEEEtran}
\bibliography{Bibliography}

\begin{thebibliography}{10}
\providecommand{\url}[1]{#1}
\csname url@samestyle\endcsname
\providecommand{\newblock}{\relax}
\providecommand{\bibinfo}[2]{#2}
\providecommand{\BIBentrySTDinterwordspacing}{\spaceskip=0pt\relax}
\providecommand{\BIBentryALTinterwordstretchfactor}{4}
\providecommand{\BIBentryALTinterwordspacing}{\spaceskip=\fontdimen2\font plus
\BIBentryALTinterwordstretchfactor\fontdimen3\font minus
  \fontdimen4\font\relax}
\providecommand{\BIBforeignlanguage}[2]{{%
\expandafter\ifx\csname l@#1\endcsname\relax
\typeout{** WARNING: IEEEtran.bst: No hyphenation pattern has been}%
\typeout{** loaded for the language `#1'. Using the pattern for}%
\typeout{** the default language instead.}%
\else
\language=\csname l@#1\endcsname
\fi
#2}}
\providecommand{\BIBdecl}{\relax}
\BIBdecl

\bibitem{yu2010robust}
X.~Yu, Y.~Jin, K.~Tang, and X.~Yao, ``Robust optimization over time—a new
  perspective on dynamic optimization problems,'' in \emph{IEEE Congress on
  evolutionary computation}.\hskip 1em plus 0.5em minus 0.4em\relax IEEE, 2010,
  pp. 1--6.

\bibitem{fu2015robust}
H.~Fu, B.~Sendhoff, K.~Tang, and X.~Yao, ``Robust optimization over time:
  Problem difficulties and benchmark problems,'' \emph{IEEE Transactions on
  Evolutionary Computation}, vol.~19, no.~5, pp. 731--745, 2015.

\bibitem{birge2011introduction}
J.~R. Birge and F.~Louveaux, \emph{Introduction to stochastic
  programming}.\hskip 1em plus 0.5em minus 0.4em\relax Springer Science \&
  Business Media, 2011.

\bibitem{ben2009robust}
A.~Ben-Tal, L.~El~Ghaoui, and A.~Nemirovski, \emph{Robust optimization}.\hskip
  1em plus 0.5em minus 0.4em\relax Princeton University Press, 2009, vol.~28.

\bibitem{bertsekas1995dynamic}
D.~P. Bertsekas, \emph{Dynamic programming and optimal control}.\hskip 1em plus
  0.5em minus 0.4em\relax Athena scientific Belmont, MA, 1995, vol.~1, no.~2.

\bibitem{pereira1991multi}
M.~V. Pereira and L.~M. Pinto, ``Multi-stage stochastic optimization applied to
  energy planning,'' \emph{Mathematical programming}, vol.~52, no. 1-3, pp.
  359--375, 1991.

\bibitem{fu2012characterizing}
H.~Fu, B.~Sendhoff, K.~Tang, and X.~Yao, ``Characterizing environmental changes
  in robust optimization over time,'' in \emph{2012 IEEE Congress on
  Evolutionary Computation}.\hskip 1em plus 0.5em minus 0.4em\relax IEEE, 2012,
  pp. 1--8.

\bibitem{fu2013finding}
------, ``Finding robust solutions to dynamic optimization problems,'' in
  \emph{European Conference on the Applications of Evolutionary
  Computation}.\hskip 1em plus 0.5em minus 0.4em\relax Springer, 2013, pp.
  616--625.

\bibitem{jin2013framework}
Y.~Jin, K.~Tang, X.~Yu, B.~Sendhoff, and X.~Yao, ``A framework for finding
  robust optimal solutions over time,'' \emph{Memetic Computing}, vol.~5,
  no.~1, pp. 3--18, 2013.

\bibitem{guo2014find}
Y.~Guo, M.~Chen, H.~Fu, and Y.~Liu, ``Find robust solutions over time by
  two-layer multi-objective optimization method,'' in \emph{2014 IEEE Congress
  on Evolutionary Computation (CEC)}.\hskip 1em plus 0.5em minus 0.4em\relax
  IEEE, 2014, pp. 1528--1535.

\bibitem{yazdani2017new}
D.~Yazdani, T.~T. Nguyen, J.~Branke, and J.~Wang, ``A new multi-swarm particle
  swarm optimization for robust optimization over time,'' in \emph{European
  Conference on the Applications of Evolutionary Computation}.\hskip 1em plus
  0.5em minus 0.4em\relax Springer, 2017, pp. 99--109.

\bibitem{novoa2018approximation}
P.~Novoa-Hern{\'a}ndez, D.~A. Pelta, and C.~C. Corona, ``Approximation models
  in robust optimization over time-an experimental study,'' in \emph{2018 IEEE
  Congress on Evolutionary Computation (CEC)}.\hskip 1em plus 0.5em minus
  0.4em\relax IEEE, 2018, pp. 1--6.

\bibitem{yazdani2018robust}
D.~Yazdani, T.~T. Nguyen, and J.~Branke, ``Robust optimization over time by
  learning problem space characteristics,'' \emph{IEEE Transactions on
  Evolutionary Computation}, vol.~23, no.~1, pp. 143--155, 2018.

\bibitem{chen2015evolutionary}
M.~Chen, Y.~Guo, H.~Liu, and C.~Wang, ``{The evolutionary algorithm to find
  robust Pareto-optimal solutions over time},'' \emph{Mathematical Problems in
  Engineering}, vol. 2015, 2015.

\bibitem{branke1999memory}
J.~Branke, ``Memory enhanced evolutionary algorithms for changing optimization
  problems,'' in \emph{Proceedings of the 1999 Congress on Evolutionary
  Computation-CEC99 (Cat. No. 99TH8406)}, vol.~3.\hskip 1em plus 0.5em minus
  0.4em\relax IEEE, 1999, pp. 1875--1882.

\end{thebibliography}

\end{document}